\newtheorem{assumption}{Assumption}
\begin{document}

\title{TBQ($\sigma$): Improving Efficiency of Trace Utilization \\for Off-Policy Reinforcement Learning}  


\author{Longxiang Shi}
\affiliation{%
  \institution{College of Computer Science and Technology, Zhejiang University}
  \city{Hangzhou} 
  \state{Zhejiang Province, China} 
  \postcode{310007}
}
\email{shilongxiang@zju.edu.cn}
\author{Shijian Li}
\authornote{Corresponding author.\label{corresponding}}
\affiliation{%
  \institution{College of Computer Science and Technology, Zhejiang University}
  \city{Hangzhou} 
  \state{Zhejiang Province, China} 
  \postcode{310007}
}
\email{shijianli@zju.edu.cn}
\author{Longbing Cao}
\affiliation{%
  \institution{Advanced Analytics Institute University of Technology Sydney}
  \city{Sydney} 
  \state{NSW, Australia}
  \postcode{2008}
}
\email{longbing.cao@uts.edu.au}
\author{Long Yang}
\affiliation{%
  \institution{College of Computer Science and Technology, Zhejiang University}
  \city{Hangzhou} 
  \state{Zhejiang Province, China} 
  \postcode{310007}
}
\email{yanglong@zju.edu.cn}
\author{Gang Pan}
\affiliation{%
  \institution{College of Computer Science and Technology, Zhejiang University}
  \city{Hangzhou} 
  \state{Zhejiang Province, China} 
  \postcode{310007}
}
\email{gpan@zju.edu.cn}

\begin{abstract}  
Off-policy reinforcement learning with eligibility traces is challenging because of the discrepancy between target policy and behavior policy. One common approach is to measure the difference between two policies in a probabilistic way, such as importance sampling and tree-backup. However, existing off-policy learning methods based on probabilistic policy measurement are inefficient when utilizing traces under a greedy target policy, which is ineffective for control problems. The traces are cut immediately when a non-greedy action is taken, which may lose the advantage of eligibility traces and slow down the learning process. Alternatively, some non-probabilistic measurement methods such as General Q($\lambda$) and Naive Q($\lambda$) never cut traces, but face convergence problems in practice. To address the above issues, this paper introduces a new method named TBQ($\sigma$), which effectively unifies the tree-backup algorithm and Naive Q($\lambda$). By introducing a new parameter $\sigma$ to illustrate the \emph{degree} of utilizing traces, TBQ($\sigma$) creates an effective integration of TB($\lambda$) and Naive Q($\lambda$) and continuous role shift between them. The contraction property of TB($\sigma$) is theoretically analyzed for both policy evaluation and control settings. We also derive the online version of TBQ($\sigma$) and give the convergence proof. We empirically show that, for $\epsilon\in(0,1]$ in $\epsilon$-greedy policies, there exists some degree of utilizing traces for $\lambda\in[0,1]$, which can improve the efficiency in trace utilization for off-policy reinforcement learning, to both accelerate the learning process and improve the performance.
\end{abstract}

%

\keywords{Reinforcement learning; Eligibility traces; Deep learning}  

\maketitle


\section{Introduction}

As a basic mechanism in reinforcement learning~(RL), eligibility traces~\cite{sutton:1988} unify and generalize temporal-difference~(TD) and Monte Carlo methods~\cite{sutton-barto:2011}. As a temporary record of an event~(e.g., taking an action or visiting a state) in RL, eligibility traces mark the memory parameters associated with the event as eligible for undergoing changes~\cite{sutton-barto:1998}. The eligible traces are then used to assign credit to the current TD-error which leads the learning of policies. With traces, credit is passed through multiple preceding states and therefore learning is often significantly faster~\cite{singh-dayan:1998}. 

With the on-policy TD learning with traces~(e.g., TD($\lambda$), Sarsa($\lambda$)), the assignment of credit to previous states decays exponentially according to the parameter $\lambda\in[0,1]$. If $\lambda=0$, the traces are set to zero immediately and the on-policy TD learning algorithm with traces is equal to one-step TD learning. If $\lambda=1$, the traces fade away slowly and no bootstrapping is made, and thus producing the Monte Carlo algorithm with online update~\cite{sutton-et-al:2014}. Moreover, the intermediate value of $\lambda$ makes the learning algorithm to perform better than the method at either extreme.


In the off-policy case, when the samples generated from a behavior policy is used to learn a different target policy, the usual approach is to measure the difference of the two policies in a probabilistic way. For example, Per-Decision Importance Sampling~\cite{precup-et-al:2000} weights returns based on the mismatch between target and behavior probabilities of the related actions. Alternatively, Tree-backup~(TB) algorithm~\cite{precup-et-al:2000} combines the value estimates for the actions along the traces according to their probabilities of target policy. More recently, Retrace($\lambda$)~\cite{munos:2016} combines Naive Q($\lambda$) with importance sampling, and offers a safe~(whatever the behavior policy is) and efficient~(can learn from full returns) way for off-policy reinforcement learning. However, existing off-policy learning methods based on state-action probability are inefficient when utilizing the traces for off-policy learning, especially when the target policy is deterministic, which is quite obvious in control problems. If the target policy is deterministic, the probability of target policy is zero when an exploratory action is taken. In this setting, importance sampling always involves a large variance since the importance ratio may be greater than 1 and is rarely used in practice. Retrace($\lambda$) and TB($\lambda$) is identical to Watkins' Q($\lambda$)~\cite{watkins-et-al:1989} and the traces are cut when an exploratory action is taken. This may cause to lose the advantage of eligibility traces and slow down the learning  process~\cite{sutton-barto:1998}. Peng's Q($\lambda$)~\cite{peng-et-al:1994} tried to solve this problem, but fails to converge to the optimal value. 

On the other hand, some existing methods do not depend on target policy probabilities and can learn from full returns without cutting traces under the greedy target policy. Unfortunately, some of them may face limitations in convergence. For instance, Naive Q($\lambda$)~\cite{sutton-barto:1998} never cuts traces thus provides a way to use full returns when performing off-policy RL with eligibility traces, which can sometimes achieve a better performance over Watkins' Q($\lambda$)~\cite{leng-et-al:2009}. A more recent work by \cite{harutyunyan:2016} shows that Naive Q($\lambda$) for control can converge to the optimal value under some conditions. An open question is: how about the intermediate condition between target policy probabilities-based and non-target policy probabilities-based methods?

To address the above question, in this paper we propose a TBQ($\sigma$) algorithm, which unifies TB($\lambda$)~(cutting traces immediately) and Naive Q($\lambda$)~(never cutting traces). By introducing a new parameter $\sigma$ to illustrate the \emph{degree} of utilizing traces, TBQ($\sigma$) creates a continuous integration and role shift between TB($\lambda$) and Naive Q($\lambda$). If $\sigma=1$ then TBQ($\sigma$) is converted to the Naive Q($\lambda$) that never cuts traces; and if $\sigma=0$ then TBQ($\sigma$) is transformed to the Watkins' Q($\lambda$). 
We then theoretically analyze the contraction property of TB($\sigma$) for both policy evaluation and control settings. We also derive the online version of TBQ($\sigma$) and give the convergence proof. Compared to TB($\lambda$), TBQ($\sigma$) is efficient in trace utilization with the greedy target policy. Compared to Naive Q($\lambda$), TBQ($\sigma$) can achieve convergence by adjusting a suitable $\sigma$.  We empirically show that, for $\epsilon\in (0,1]$ in $\epsilon$-greedy policies, there exists some degree of utilizing traces for $\lambda\in[0,1]$, which can improve the efficiency in trace utilization, therefore accelerating the learning process and improving the performance as well.

\section{Preliminaries and Problem Settings}

Here, we introduce some basic concepts, our target problems, notations, and related work. 

\subsection{Preliminaries and Problem Settings}
A reinforcement learning problem can be formulated as a Markovian Decision Process~(MDP) $(S,A,\gamma, P, r)$, where $S$ is a finite state space, $A$ is the action space, $\gamma{\in}[0,1]$ is the discount factor and $P$ is the mapping of transition function for each state-action pair $(s,a){\in}(S,A)$ to a distribution over $S$. A policy $\pi$ is a probability distribution over the set $(S{\times}A)$.

The state-action value $Q$ is a mapping on $S{\times}A$ to $\mathbb{R}$, which indicates the expected discounted future reward when taking action $a$ at state $s$ under policy $\pi$:
\begin{equation}Q(s,a):=\mathbb{E}_{\pi}(r_1+{\gamma}r_2+...+{\gamma}^{T-1}r_T|s_0=s,a_0=a)
\end{equation}
where $T$ is the time of termination. For each policy $\pi$, we define the operator $P^{\pi}$~\cite{harutyunyan:2016}:
$$(P^{\pi}Q)(s,a):=\sum\limits_{s'{\in}S}\sum\limits_{a'{\in}A}P(s'|s,a){\pi}(a'|s')Q(s',a')$$

For an arbitrary policy $\pi$ we use $Q^{\pi}$ to describe the unique Q-function corresponding to $\pi$:
$$Q^{\pi}:=\sum\limits_{t{\geq}0}{\gamma}^t(P^\pi)^tr$$

The Bellman operator $\mathcal{T}^{\pi}$ for a policy $\pi$ is defined as:
\begin{equation}
\mathcal{T}^{\pi}Q:=r+{\gamma}P^{\pi}Q
\end{equation}
Obviously, $\mathcal{T}^{\pi}$ has a unique fixed point $Q^{\pi}$:
\begin{equation}
\mathcal{T}^{\pi}Q^{\pi}=Q^{\pi}=(I-{\gamma}P^{\pi})^{-1}r
\end{equation}
The Bellman optimality operator $\mathcal{T}$ introduces a maximization over a set of policies and is defined as:
\begin{equation}
\mathcal{T}Q:=r+\gamma\max\limits_{\pi}P^{\pi}Q
\end{equation}
Its unique fixed point is $Q^*:=\sup_{\pi}{Q^{\pi}}$.

The Bellman equation can also be extended using the exponentially weighted sum of $n$-step returns~\cite{sutton:1988}:

\begin{equation}
\begin{aligned}
\mathcal{T}_{\lambda}^{\pi}&:=(1-\lambda)\sum\limits_{n{\geq}0}{\lambda}^n[(\mathcal{T}^\pi)^n Q]\\
&=Q+(I-\lambda\gamma{P}^{\pi})^{-1}(\mathcal{T}^{\pi}Q-Q)
\end{aligned}
\end{equation}

In this $\lambda$-return version of Bellman equation, the fixed point of $\mathcal{T}_{\lambda}^{\pi}$ is also $Q^{\pi}$. By varying the parameter $\lambda$ from 0 to 1, $\mathcal{T}_{\lambda}^{\pi}$ provides a continuous connection and role shift between one-step TD learning and Monte Carlo methods.

In this paper, we consider two types of RL problems, and mainly focus on action-value case under the off-policy setting. That is, in a \emph{policy evaluation} problem, we wish to estimate $Q^{\pi}$ of a fixed policy $\pi$ under the samples drawn from a different behavior policy $\mu$; in a \emph{control} problem, we seek to approximate $Q^*$ based on the iteration of Q-values. We specially focus on the learning scenario that the target policy is greedy, which is obvious in the control setting. Our main challenge is to improve the efficiency of trace utilization as well as ensure learning convergence during the off-policy learning process.

\subsection{Related Work}

Based on the usage of target policy probability when calculating the $\lambda$-return, existing works can be divided into 2 categories:

\subsubsection{Target policy probability-based methods.}
The $n$-step methods face challenges when involving off-policy, which has triggered to produce many methods to solve those challenges. The most common approach is to measure the two policies in a probabilistic sense\cite{meng-et-al:2018}. Based on the  work in \cite{munos:2016}, several off-policy return-based methods based on target policy probability: importance sampling (IS), tree-backup and Retrace($\lambda$) can be expressed in a unified operator $\mathcal{R}$ as follows:
\begin{equation}
\begin{aligned}
&\mathcal{R}Q(s,a):=Q(s,a)+\mathbb{E}_{\mu}[\sum\limits_{t{\geq}0}\gamma^t(\prod_{i=1}^{t}c_i)\delta_t]\\
&\delta_t=r_t+\gamma\mathbb{E}_{\pi}Q(s_{t+1},\cdot)-Q(s_t,a_t)
\end{aligned}
\end{equation}

\textbf{Importance sampling}: $c_i=\frac{\pi(a_i|s_i)}{\mu(a_i|s_i)}$. The IS methods correct the difference between target policy and behavior policy by their division of probabilities~\cite{sutton-barto:2011}. For example, Per-Decision Importance Sampling (PDIS)~\cite{precup-et-al:2000} incorporates eligibility traces with importance sampling. Since the estimation value contains a cumulative production of importance rations ($c_s$) which may exceeds 1, IS methods suffer from large variance and are seldom used in practice. In addition, weighted importance sampling~\cite{precup:2000} can reduce the variance of IS, but leads to a biased estimation.

\textbf{Tree-backup}: $c_i=\lambda\pi(a_i|s_i)$. The TB($\lambda$) algorithm~\cite{precup-et-al:2000} provides an alternative way for off-policy learning without IS. In control problems, if the target policy is greedy, then TB($\lambda$) produces Watkins' Q($\lambda$)~\cite{watkins-dayan:1992}. In this case, TB($\lambda$) is not efficient as it cuts traces when encountered an exploratory action and is not able to learn from the full returns. 

\textbf{Retrace($\lambda$)}: $c_i=\lambda\min(1,\frac{\pi(a_i|s_i)}{\mu(a_i|s_i)})$, was proposed in \cite{munos:2016}. Comparing to IS methods, this method truncates the importance ration by 1 to reduce the variance in IS. It is proved to convergence under any behavior policy and can learn from full returns when the behavior and target policies are near. However, in the control case when the target policy is greedy, Retrace($\lambda$) is identical to TB($\lambda$) and is not efficient in utilizing traces.

\subsubsection{Non-target policy probability-based methods.}
In addition, there are also some methods that does not depend on target policy probability, and can make full use of the traces:

\textbf{General Q($\lambda$)}: General Q($\lambda$)~\cite{seijen-et-al:2009}\cite{hasselt:2011} generalizes the on-policy Sarsa($\lambda$) using the following update equation:
\begin{equation*}
    \begin{aligned}
    Q(s_t,a_t)\leftarrow& Q(s_t,a_t)+\alpha[\sum\limits_{i\geq t}^{T}(\lambda\gamma)^{i-t}\delta_t+\mathbb{E}_{\pi}Q(s_{t+1},\cdot)\\
    -&Q(s_t,a_t)]\\
    \delta_t=&r_t+\gamma\mathbb{E}_{\pi}Q(s_{t+1},\cdot)-\mathbb{E}_{\pi}Q(s_t,a_t)
    \end{aligned}
\end{equation*}
In control case, when target policy is greedy, General Q($\lambda$) is identical to Peng's Q($\lambda$)~\cite{peng-et-al:1994}. It does not cut traces so much as Watkins' Q($\lambda$). However, When learning is off-policy, General Q($\lambda$) lead to a biased estimation and does not converge to $Q^{\pi}$.

\textbf{Q($\lambda$) with off policy corrections} \cite{harutyunyan:2016}: it is an off-policy correction method based on a Q-baseline. Their proposed operator $\mathcal{R}_{\lambda}^{\pi,\mu}$ is the same as $\mathcal{R}$ if $c_i=\lambda$ in (6). Their algorithms, named $Q^{\pi}(\lambda)$ and $Q^*(\lambda)$ for policy evaluation and control, respectively. If the distance $d=\max\limits_s\lVert\pi(\cdot|s)-\mu(\cdot|s)\rVert$ between target policy $\pi$ and behavior policy $\mu$ is small, i.e., $d<\frac{1-\gamma}{\gamma}$, $Q^{\pi}(\lambda)$ converges to its fixed point $Q^{\pi}$. In control scenarios, $Q^*(\lambda)$ is equal to Naive $Q(\lambda)$~\cite{sutton-barto:1998} and is guaranteed to converge to $Q^*$ under $\lambda<\frac{1-\gamma}{2\gamma}$. Besides, they also empirically show that in fact there exists some trade-off between $d$ and $\lambda$ beyond the convergence guarantee, which can make the learning faster and better. In addition, $Q^{\pi}(\sigma,\lambda)$ is proposed in~\cite{yang-et-al:2018} to combine Sarsa($\lambda$) and Q$^{\pi}(\lambda)$, and inherit the similar properties with Q$^\pi$($\lambda$).

In conclusion, existing off-policy learning methods based on target policy probability are inefficient when utilizing eligibility traces, especially when target policy is greedy. In this scenario, The traces are cut immediately when encountered an exploratory action and thus may lose the advantage of eligibility traces and slow down the learning process. In addition, existing non-target policy probability based methods can make full use of the traces, but may face limitations in convergence. In this paper, we try to solve this dilemma by create a hybridization of those two different methods.

\section{TBQ($\sigma$): Degree of Traces Utilization}

In the RL literature, unifying different algorithmic ideas to leverage the pros and cons in each idea and to produce better algorithms has been a pragmatic approach~\cite{deasis:2017}. This also applies to several policy learning methods, e.g., TD($\lambda$) to unify TD-learning and Monte Carlo methods, Q($\sigma$)~\cite{deasis:2017} to fuse multi-step tree-backup and Sarsa, and Q($\sigma$,$\lambda$)~\cite{yang-et-al:2018} to integrate $Q^\pi(\sigma)$ and Sarsa($\lambda$). Such hybridization is useful for balancing the capabilities of different trace-cutting methods discussed above. Accordingly, in this paper, we introduce a new parameter $\sigma$ into trace-cutting to enable the degree of utilizing traces. The proposed method, TBQ($\sigma$), unifies TB($\lambda$)~(cutting traces immediately) and Naive Q($\lambda$)~(never cutting traces). 

We first give the definition of operator that used for the update equation of TBQ($\sigma$):
\begin{definition}
The proposed operator $\mathcal{R}_{\sigma}$ is a map on $\mathbb{R}^{|S|\times{|A|}}$ to $\mathbb{R}^{|S|\times{|A|}}$, ${\forall}s{\in}S,a{\in}A,\sigma\in[0,1]:$\\
\begin{equation}
\begin{aligned}
\mathcal{R}_{\sigma}:&\mathbb{R}^{|S|\times{|A|}}\leftarrow\mathbb{R}^{|S|\times{|A|}}\\
&Q(s,a):=Q(s,a)+\mathbb{E}_{\mu}[\sum\limits_{t{\geq}0}\gamma^t(\prod_{i=1}^{t}c_i)\delta_t]
\end{aligned}
\end{equation}

where
$$c_i=\lambda[\sigma+(1-\sigma)\pi(a_i|s_i)]$$
$$\delta_t=r_t+\gamma\mathbb{E}_{\pi}Q(s_{t+1},\cdot)-Q(s_t,a_t)$$
\end{definition}

TBQ($\sigma$) linearly combines TB($\lambda$) and Naive Q($\lambda$) by using the degree parameter $\sigma$. When $\sigma=0$ then TBQ($\sigma$) is converted to TB($\lambda$), and $\sigma=1$ TBQ($\sigma$) is transformed to Naive Q($\lambda$). By exploratory adjusting the parameter $\sigma$ from 0 to 1 we can produce a continuous integration and role shift between cutting the traces immediately and never cutting traces. We then analyze the contraction property of $\mathcal{R}_{\sigma}$ in policy evaluation. We here use $\lVert\cdot\rVert$ to represent the supremum norm.

\begin{theorem}
The proposed operator $\mathcal{R}_{\sigma}$ has a unique fixed point $Q^{\pi}$. If the behavior policy and target policy are near, i.e.,

$d=\max\limits_x{\lVert\pi(\cdot|x)-\mu(\cdot|x)\rVert}<(1-\gamma)[\frac{1}{\gamma\lambda}+1-\sigma]$, then $\lVert\mathcal{R}_{\sigma}Q-Q^{\pi}\rVert=O(\eta^k)$.
\end{theorem}

\begin{proof}
Unfolding the operator:
\begin{equation*}
\begin{aligned}
\mathcal{R}_{\sigma}Q-Q^{\pi}&=\sigma(\mathcal{R}_{\lambda}^{\pi,\mu}Q-Q^{\pi})+(1-\sigma)(\mathcal{R}Q-Q^{\pi})\\
\end{aligned}
\end{equation*}

Taking the supremum norm:
\begin{equation*}
\begin{aligned}
\lVert\mathcal{R}_{\sigma}Q-Q^{\pi}\rVert&=\lVert\sigma(\mathcal{R}_{\lambda}^{\pi,\mu}Q-Q^{\pi})+(1-\sigma)(\mathcal{R}Q-Q^{\pi})\rVert\\
&\leq\sigma\lVert\mathcal{R}_{\lambda}^{\pi,\mu}Q-Q^{\pi}\rVert+(1-\sigma)\lVert\mathcal{R}Q-Q^{\pi}\rVert\\
\end{aligned}
\end{equation*}

Per Lemma 1 in \cite{harutyunyan:2016} we have:
$$\lVert\mathcal{R}_{\lambda}^{\pi,\mu}Q-Q^{\pi}\rVert\leq\frac{\gamma(1-\lambda+\lambda d)}{1-\lambda\gamma}\lVert Q-Q^{\pi}\rVert$$

where $d$ is the distance between $\pi$ and $\mu$:

$$\max\limits_{x}\lVert\pi(\cdot|x)-\mu(\cdot|x)\rVert\leq d$$

Per Theorem 1 in \cite{munos:2016} we have:
$$\lVert\mathcal{R}Q-Q^{\pi}\rVert\leq\gamma\lVert Q-Q^{\pi}\rVert$$

Adding the above two items we have:
$$\lVert\mathcal{R}_{\sigma}Q-Q^{\pi}\rVert\leq\eta\lVert Q-Q^{\pi}\rVert$$

where $\eta=\frac{\gamma-\lambda\gamma^2+\lambda\sigma\gamma^2+\sigma\gamma\lambda d-\sigma\gamma\lambda}{1-\gamma\lambda}$.

Further, for $d<(1-\gamma)[\frac{1}{\gamma\lambda}+1-\sigma]$, $\eta<1$, we have $$\lVert\mathcal{R}_{\sigma}Q-Q^{\pi}\rVert=O(\eta^k)$$
\end{proof}

Theorem 3.2 indicates that, for any $\lambda\in[0,1]$, if the distance between two policies are near with regard to $\sigma$, then $Q_k$ converges to $Q^\pi$. Comparing to $Q^{\pi}(\lambda)$~\cite{harutyunyan:2016}, our algorithm derives a wider convergence range w.r.t $\sigma$. We provide a hybridization of utilizing traces based on TB($\lambda$) and Naive Q($\lambda$). In practice, the convergence condition can be satisfied by adjusting the parameter $\sigma$ under different situations.

\section{TBQ($\sigma$) for Control}
In control problems, we want to estimate $Q^*$ by iteratively applying policy evaluation and policy improvement processes, which is referred to \emph{generalized policy iteration}~(GPI)~\cite{sutton-barto:1998}. Denoting $(Q_k,\pi_k)$ as the Q-value and the corresponding target policy in the iteration process under the arbitrary behavior policy $\mu_k$ at step $k$, then $\pi_{k+1}$ can be retrieved by our operator $\mathcal{R_{\sigma}^{\pi,\mu}}$ by using the following steps:
\begin{itemize}
\item Policy evaluation step:
$$Q_{k+1}=\mathcal{R}_{\sigma}^{\pi_k,\mu_k}Q_k$$
\item Policy improvement step:
$$\pi_{k+1}=greedy(Q_{k+1})$$
\end{itemize}

We here use the notion $greedy(Q_{k})$ to represent $\pi_{k}$, which is greedy with respect to $Q_{k}$. Based on GPI, the TBQ($\sigma$) algorithm for control problems is depicted in Algorithm 1 with an online forward view, i.e., TBQ$^F$($\sigma$). Note that $\mathbb{I}\{(s_t,a_t)=(s,a)\}$ is the indicator function.

To analyze the convergence of Algorithm 1, we first consider off-line version of the TBQ($\sigma$) algorithm. The following lemma states that, if $\sigma$ satisfies some condition with regard to $\lambda$, then the off-line version of TBQ($\sigma$) is guaranteed to converge.

\begin{lemma}
Considering the sequence $\{(Q_k,\pi_k)\}_{k\geq 0}$ generated by the operator $\mathcal{R}_{\sigma}$ under a greedy target policy $\pi_k$ and an arbitrary behavior policy $\mu$, we have:
$${\lVert}Q_{k+1}-Q^*{\rVert}{\leq}{\eta}{\lVert}Q_{k}-Q^*{\rVert}$$
where $\eta=\frac{\sigma\gamma+\sigma\lambda\gamma}{1-\lambda\gamma}+(1-\sigma)\gamma$.

Specifically, if $\lambda\leq\frac{1-\gamma}{\sigma\gamma
+\sigma\gamma^2+\gamma-\gamma^2}$, then the sequence $\{Q_k\}_{k\geq 1}$ converges to $Q^*$ exponentially fast.
\end{lemma}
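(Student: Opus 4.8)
The plan is to decompose the control operator $\mathcal{R}_{\sigma}^{\pi_k,\mu}$ as a convex combination of two operators whose contraction towards $Q^*$ is already (essentially) known from the literature, exactly mirroring the structure used in the proof of Theorem~3.2. Writing $c_i = \lambda[\sigma + (1-\sigma)\pi_k(a_i|s_i)]$, I would split
\begin{equation*}
\mathcal{R}_{\sigma}^{\pi_k,\mu}Q_k - Q^* = \sigma\bigl(\mathcal{R}_{\lambda}^{\pi_k,\mu}Q_k - Q^*\bigr) + (1-\sigma)\bigl(\mathcal{R}^{\pi_k,\mu}Q_k - Q^*\bigr),
\end{equation*}
where the first term corresponds to Harutyunyan et al.'s $Q^*(\lambda)$ operator ($c_i=\lambda$, i.e.\ Naive Q($\lambda$) for control) and the second to the tree-backup operator ($c_i=\lambda\pi_k(a_i|s_i)$, which under a greedy $\pi_k$ is Watkins' Q($\lambda$)). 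Applying the triangle inequality and the supremum norm immediately gives
\begin{equation*}
\lVert Q_{k+1}-Q^*\rVert \leq \sigma\lVert\mathcal{R}_{\lambda}^{\pi_k,\mu}Q_k - Q^*\rVert + (1-\sigma)\lVert\mathcal{R}^{\pi_k,\mu}Q_k - Q^*\rVert.
\end{equation*}

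Next I would invoke the two component bounds. For the Naive-Q($\lambda$)-for-control term, the relevant estimate from \cite{harutyunyan:2016} (their analysis of $Q^*(\lambda)$ under a greedy target policy) is
$\lVert\mathcal{R}_{\lambda}^{\pi_k,\mu}Q_k - Q^*\rVert \leq \frac{\gamma+\lambda\gamma}{1-\lambda\gamma}\lVert Q_k-Q^*\rVert$,
which is the $d\to 0$-type contraction one gets because the greedy policy $\pi_k$ on $Q_k$ is "as good as" the optimal policy on $Q_k$ in the sense that $P^{\pi_k}Q_k \ge P^{\pi^*}Q_k$, so the error against $Q^*$ is controlled. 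For the tree-backup term, since $\pi_k=greedy(Q_{k+1})$ — or more precisely since tree-backup with a greedy target policy is the Watkins' Q($\lambda$) backup, which is known to be a $\gamma$-contraction towards $Q^*$ — I would use $\lVert\mathcal{R}^{\pi_k,\mu}Q_k - Q^*\rVert \leq \gamma\lVert Q_k-Q^*\rVert$, citing Theorem~1 of \cite{munos:2016} specialized to the control case (Retrace/TB is "safe" for any behavior policy). Combining,
\begin{equation*}
\lVert Q_{k+1}-Q^*\rVert \leq \Bigl(\sigma\tfrac{\gamma+\lambda\gamma}{1-\lambda\gamma} + (1-\sigma)\gamma\Bigr)\lVert Q_k-Q^*\rVert = \eta\lVert Q_k-Q^*\rVert,
\end{equation*}
which is the claimed $\eta = \frac{\sigma\gamma+\sigma\lambda\gamma}{1-\lambda\gamma}+(1-\sigma)\gamma$.

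Finally, for the "specifically" clause, I would simply solve the inequality $\eta<1$ for $\lambda$. Clearing the positive denominator $1-\lambda\gamma$ turns $\sigma\gamma+\sigma\lambda\gamma+(1-\sigma)\gamma(1-\lambda\gamma) < 1-\lambda\gamma$ into a linear inequality in $\lambda$; collecting the $\lambda$ coefficients gives the threshold $\lambda \le \frac{1-\gamma}{\sigma\gamma+\sigma\gamma^2+\gamma-\gamma^2}$ (one should check the coefficient of $\lambda$ on the binding side is positive so the direction of the inequality is as stated, and note $\sigma=0$ recovers the always-true $\lambda\le\frac{1-\gamma}{\gamma-\gamma^2}=\frac{1}{\gamma}$, consistent with TB($\lambda$)/Watkins being unconditionally convergent). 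Then $\eta<1$ plus the one-step contraction yields $\lVert Q_k-Q^*\rVert \le \eta^k\lVert Q_0-Q^*\rVert\to 0$ geometrically.

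I expect the main obstacle to be the first component bound: unlike the policy-evaluation case (Theorem~3.2), here the target policy $\pi_k$ changes every iteration and is \emph{greedy} with respect to the iterate, so the clean "distance $d$ between fixed $\pi$ and $\mu$" argument of \cite[Lemma~1]{harutyunyan:2016} does not apply verbatim. The delicate point is justifying that the Naive-Q($\lambda$)-type operator still contracts towards the \emph{optimal} $Q^*$ with rate $\frac{\gamma(1+\lambda)}{1-\lambda\gamma}$ rather than merely towards $Q^{\pi_k}$; this requires the monotonicity/greedy-improvement argument from the $Q^*(\lambda)$ analysis in \cite{harutyunyan:2016} (bounding $P^{\pi_k}Q_k$ both above and below using optimality of $Q^*$ and greediness of $\pi_k$). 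Once that component bound is in hand, the rest is the routine convex-combination and algebra above.
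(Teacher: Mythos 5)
Your proposal follows essentially the same route as the paper's proof: the same convex-combination decomposition $\mathcal{R}_{\sigma}Q_k-Q^* = \sigma(\mathcal{R}_{\lambda}^{\pi,\mu}Q_k-Q^*)+(1-\sigma)(\mathcal{R}Q_k-Q^*)$, the same two component contraction bounds imported from \cite{harutyunyan:2016} and \cite{munos:2016}, and the same algebra yielding $\eta$ and the threshold on $\lambda$. Your closing remark about the delicacy of the $Q^*(\lambda)$-type bound under a changing greedy target policy is a fair caveat, but the paper resolves it exactly as you propose, by citing that analysis directly.
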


\begin{proof}

Unfolding the operator:
\begin{equation*}
\begin{aligned}
{\lVert}Q_{k+1}-Q^*{\rVert}&={\lVert}\mathcal{R}_{\sigma}Q_k-Q^*{\rVert}\\
&\leq\sigma{\lVert}\mathcal{R}_{\lambda}^{{\pi},{\mu}}Q_k-Q^*{\rVert}+(1-\sigma){\lVert}\mathcal{R}Q_k-Q^*{\rVert}
\end{aligned}
\end{equation*}

based on \cite{harutyunyan:2016} and \cite{munos:2016}, we have:
$${\lVert}\mathcal{R}_{\lambda}^{{\pi},{\mu}}Q_k-Q^*{\rVert}\leq\frac{\gamma+\lambda\gamma}{1-\lambda\gamma}{\lVert}Q_k-Q^*{\rVert}$$
$${\lVert}\mathcal{R}Q_{k}-Q^*{\rVert}\leq\gamma{\lVert}Q_k-Q^*{\rVert}$$

As a consequence, we deduce the result:
\begin{equation*}
\begin{aligned}
{\lVert}Q_{k+1}-Q^*{\rVert}&={\lVert}\mathcal{R}_{\sigma}Q_k-Q^*{\rVert}\\
&\leq\sigma{\lVert}\mathcal{R}_{\lambda}^{{\pi},{\mu}}Q_k-Q^*{\rVert}+(1-\sigma){\lVert}\mathcal{R}Q_k-Q^*{\rVert}\\
&\leq\sigma\frac{\gamma+\lambda\gamma}{1-\lambda\gamma}{\lVert}Q_k-Q^*{\rVert}+(1-\sigma)\gamma{\lVert}Q_k-Q^*{\rVert}\\
&=[\frac{\sigma\gamma+\sigma\gamma\lambda}{1-\lambda\gamma}+(1-\sigma)\gamma]{\lVert}Q_k-Q^*{\rVert}
\end{aligned}
\end{equation*}
\end{proof}

Lemma 4.1 states that, for any $d$, if $\lambda\leq\frac{1-\gamma}{\sigma\gamma
+\sigma\gamma^2+\gamma-\gamma^2}$ then the off-line control algorithm is guaranteed to converge. However, similar to  $Q^*(\lambda)$~\cite{harutyunyan:2016}, in practice, there exist some trade-offs between $\lambda$ and $\sigma$ under different $d$ values, which goes beyond the convergence guarantee. By introducing a new parameter $\sigma$, we can alleviate $\lambda-d$ relationship through adjusting a suitable $\sigma$. The traces can also be utilized when an exploratory action is taken. In addition, comparing to Naive Q($\lambda$), we derive a wider convergence range by tuning $\sigma$. Although we have not give a detail theoretical analyze of $\lambda-d$ relationship under different $\sigma$, in the experiment part we will show that for any $\lambda\in[0,1]$ and $\epsilon\in[0,1]$ in $\epsilon-greedy$ policies, there exist some degree of utilizing traces $\sigma$, which can accelerate the learning process and yield a better performance through utilizing the full returns as well.

\begin{algorithm}[H]
\caption{TBQ$^F$($\sigma$): The online forward view version of TBQ($\sigma$) algorithm}
\begin{algorithmic}
\STATE {\textbf{Input}: discounting factor $\gamma$, degree of utilizing traces $\sigma$, bootstrapping parameter $\lambda$, and stepsize $\alpha_k$}
\STATE {\textbf{Initialization}: $Q_0(s,a)$ arbitrary}
\FOR{Episode $k$ from $1$ to $n$}
\STATE $Q_{k+1}(s,a)\leftarrow Q_k(s,a)$ $\forall (s,a)$
\STATE $e(s,a)\leftarrow 0$ $\forall (s,a)$
\STATE Sample a trajectory $s_0,a_0,r_0,...,x_{T_k}$ from $\mu_k$
\FOR{Sample $t$ from $0$ to $T_k-1$}
\STATE $\delta_t^{\pi_k}\leftarrow r_t+\gamma\max\limits_{a'}Q_{k+1}(s_{t+1},a')-Q_{k+1}(s_t,a_t)$
\STATE 
$c_t=
\begin{cases}
1&t=0\\
\sigma+(1-\sigma)\pi(a_t|s_t)&t\neq 0
\end{cases}$
\STATE $e(s,a)\leftarrow\lambda\gamma c_t e(s,a)+\mathbb{I}\{(s_t,a_t)=(s,a)\}$ $\forall (s,a)$
\STATE $Q_{k+1}\leftarrow Q_{k+1}+\alpha \delta_t^{\pi_k} e(s,a)$ $\forall (s,a)$
\ENDFOR
\ENDFOR
\end{algorithmic}
\end{algorithm}

\subsection{Convergence Analysis of TBQ($\sigma$) Algorithm}
We now consider the convergence proof of TBQ($\sigma$) described in Algorithm 1. First, we make some assumptions similar to~\cite{harutyunyan:2016}~\cite{munos:2016}.

\begin{assumption}
For bounded stepsize $\alpha_k$:
$\sum\limits_{k\geq 0}\alpha_k(s,a)=\infty$, 

$\sum\limits_{k\geq 0}\alpha_k(s,a)<\infty$.
\end{assumption}
\begin{assumption}
Minimum visit frequency: all $(s,a)$ pairs are visited infinitely often:
$\sum\limits_{t\geq 0}P\{(s_t,a_t)=(s,a)\}\geq D>0$.
\end{assumption}
\begin{assumption}
Finite sample trajectories: $\mathbb{E}_{\mu_k}(T_k^2)<\infty$, $T_k$ denotes the length of sample trajectories.
\end{assumption}

Under those assumptions, Algorithm 1 can converge to $Q^*$ with probability 1 as stated below:
\begin{theorem}
Considering the sequence of Q-functions $\{(Q_k,\pi_k)\}_{k\geq 0}$ generated from Algorithm 1, where $\pi_k$ is the greedy policy with respect to $Q_k$, if $\lambda\leq\frac{1-\gamma}{\sigma\gamma
+\sigma\gamma^2+\gamma-\gamma^2}$, then under Assumptions 1-3, $Q_k\rightarrow Q^*$ with probability 1.
\end{theorem}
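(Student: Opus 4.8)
The plan is to cast Algorithm~1 as a stochastic approximation recursion converging to the fixed point of a uniformly contracting operator, and then invoke the standard stochastic approximation theorem underlying the convergence arguments of \cite{munos:2016} and \cite{harutyunyan:2016}. First I would rewrite the cumulative per-episode update in the canonical form
\begin{equation*}
Q_{k+1}(s,a) = (1-\beta_k(s,a))Q_k(s,a) + \beta_k(s,a)\big[(\mathcal{R}_{\sigma}^{\pi_k,\mu_k}Q_k)(s,a) + \omega_k(s,a)\big],
\end{equation*}
where $\pi_k = greedy(Q_k)$, the effective learning rate $\beta_k(s,a)$ absorbs the (bounded, by Assumption 3) expected number of visits to $(s,a)$ along a trajectory sampled from $\mu_k$, the operator $\mathcal{R}_{\sigma}^{\pi_k,\mu_k}$ is the offline operator of Definition~3.1, and $\omega_k$ is the noise term capturing the discrepancy between the online forward-view accumulation of the $\alpha\,\delta_t^{\pi_k}e(s,a)$ increments over the whole episode and its conditional expectation. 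The crucial identity to establish here is that, conditioned on the history $\mathcal{F}_k$, the expectation over trajectories drawn from $\mu_k$ of the eligibility-weighted sum of TD errors reproduces exactly the series $\mathbb{E}_{\mu}[\sum_{t\ge0}\gamma^t(\prod_{i=1}^t c_i)\delta_t]$ defining $\mathcal{R}_{\sigma}$; i.e., the online forward view and the offline operator agree in expectation, so that $\mathbb{E}[\omega_k\mid\mathcal{F}_k]=0$ (up to the usual $O(\alpha^2)$ correction arising from using the partially-updated $Q_{k+1}$ inside $\delta_t^{\pi_k}$ rather than a frozen $Q_k$).

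Second, I would check the three hypotheses of the stochastic approximation lemma. \textbf{(i) Contraction toward a fixed point.} Lemma~4.1 already gives $\lVert\mathcal{R}_{\sigma}^{\pi_k,\mu_k}Q_k-Q^*\rVert\le\eta\lVert Q_k-Q^*\rVert$ with $\eta=\frac{\sigma\gamma+\sigma\lambda\gamma}{1-\lambda\gamma}+(1-\sigma)\gamma<1$ precisely under the hypothesis $\lambda\le\frac{1-\gamma}{\sigma\gamma+\sigma\gamma^2+\gamma-\gamma^2}$, and this bound is uniform over all greedy policies $\pi_k$, so the non-stationarity of the target policy in the GPI scheme is harmless. \textbf{(ii) Stepsize conditions.} These follow from Assumption~1 combined with Assumption~2, which ensures every $(s,a)$ is updated infinitely often so the effective rates $\beta_k(s,a)$ still satisfy the Robbins--Monro conditions. \textbf{(iii) Bounded noise.} Using Assumption~3, $\mathbb{E}_{\mu_k}[T_k^2]<\infty$, I would bound $\mathbb{E}[\omega_k(s,a)^2\mid\mathcal{F}_k]$ by an affine function of $\lVert Q_k\rVert^2$: each trace $e(s,a)$ is a geometrically-weighted sum of at most $T_k$ indicators, each $\delta_t^{\pi_k}$ is bounded by a constant times $\lVert Q_k\rVert$ plus the bounded rewards, so the squared episodic backup is controlled by $T_k^2$ up to constants, whose conditional expectation is finite.

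Third, with (i)--(iii) verified, the stochastic approximation theorem yields $Q_k\to Q^*$ with probability~1, and since $\pi_k=greedy(Q_k)$ the induced policies converge to an optimal policy as well, completing the proof.

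The main obstacle I expect is the first step: rigorously showing that the online forward-view accumulation of the increments $\alpha\,\delta_t^{\pi_k}e(s,a)$ over an entire episode has conditional expectation equal to $\beta_k(s,a)$ times $(\mathcal{R}_{\sigma}^{\pi_k,\mu_k}Q_k-Q_k)(s,a)$, and that the residual is a genuine martingale-difference sequence with the variance bound above. This ``forward-view equals expected offline operator'' bookkeeping, together with the extra care needed because the TD errors $\delta_t^{\pi_k}$ are computed against the within-episode partially-updated estimate $Q_{k+1}$, is where the real work lies; the contraction and stepsize parts are immediate consequences of Lemma~4.1 and Assumptions~1--3.
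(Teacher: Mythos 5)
Your proposal follows essentially the same route as the paper's proof: both recast the episodic update as a stochastic approximation recursion $Q_{k+1}=(1-\beta_k)Q_k+\beta_k(\mathcal{R}_{\sigma}^{\pi_k,\mu_k}Q_k+\text{noise})$ with an effective stepsize absorbing the visit frequency, invoke Lemma~4.1 for the uniform contraction under the stated $\lambda$ condition, use Assumptions~1--3 for the Robbins--Monro conditions and the $T_k^2$-based noise bound, and conclude via the standard result (Proposition~4.5 of Bertsekas--Tsitsiklis, which is also what underlies the cited works). The paper merely makes explicit the split of your single noise term $\omega_k$ into a zero-mean part $w_k$ and an online-versus-offline discrepancy part $v_k$ handled by Proposition~5.2 of the same reference --- exactly the bookkeeping issue you flag as the main obstacle.
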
 

\begin{proof}
For reading convenience, we first define some notations:
Let $k$ denote the $k$th iteration, $t$ denote the length of the trajectory, $l$ denote the $l$th sample of current trajectory, then the accumulating trace~\cite{sutton-barto:1998} $z_{l,t}^k$ can be written as:
\begin{equation}
z_{l,t}^k=\sum\limits_{j=k}^t\gamma^{t-j}(\prod\limits_{i=j+1}^t c_i)\mathbb{I}\{(s_j,a_j)=(s_l,a_l)\}
\end{equation}

We use $Q_{k}^o(s_l,a_l)$ to emphasize the online setting, then Equation (7) can be written as:
\begin{equation}
Q_{k+1}^o(s_l,a_l)\leftarrow Q_k^o(s_l,a_l)+\alpha_k(s_l,a_l)\sum\limits_{t\geq l}\sigma_t^{\pi_k}z_{l,t}^k
\end{equation}
\begin{equation}
\delta_t^{\pi_k}=r_t+\gamma\mathbb{E}_{\pi_k}Q_k^o(s_{t+1},\cdot)-Q_k^o(s_t,a_t)
\end{equation}

Since $c_i=\lambda[\sigma+(1-\sigma)\pi(a_i|s_i)]\leq 1$, based on Assumption 3, we have:
\begin{equation}
\mathbb{E}[\sum\limits_{t\geq l}z_{l,t}^k]<\mathbb{E}[T_k^2]<\infty
\end{equation}

Therefore, the total update is bounded based on Equation (11). Further, we can rewrite the update Equation (9) as:

\begin{equation*}
\begin{aligned}
Q_{k+1}^o(s_l,a_l)&\leftarrow(1-D_k\alpha_k)Q_k^o(s_l,a_l)+D_k\alpha_k(\mathcal{R}_{\sigma}^{\pi_k}Q_k^o(s_l,a_l)\\
&+w_k(s_l,a_l)+v_k(s_l,a_l))\\
w_k(s_l,a_l)&:=(D_k)^{-1}[\sum\limits_{t\geq l}\delta_t^{\pi_k}z_{l,t}^k-\mathbb{E}_{\mu_k}(\sum\limits_{t\geq l}\delta_t^{\pi_k}z_{l,t}^k)]\\
v_k(s_l,a_l)&:=(D_k\alpha_k)^{-1}(Q_{k+1}^o(s_l,a_l)-Q_{k+1}(s_l,a_l))\\
D_k&:=D_k(s_l,a_l)=\sum\limits_{t\geq l}P\{(s_t,a_t)=(s_l,a_l)\}\\
\end{aligned}
\end{equation*}

Based on Assumptions 1 and 2, the new stepsize $(D_k\alpha_k)$ satisfies Assumption (a) of Proposition 4.5 in \cite{bertsekas-tsitsiklis:1996}. Lemma 4.1 states that the operator $\mathcal{R}_{\sigma}$ is a contraction, which satisfies Assumption (c) of Proposition 4.5 in~\cite{bertsekas-tsitsiklis:1996}. Based on Equation (7) and the bounded reward function, the variance noise term $w_k$ is bounded, thus Assumption (b) of Proposition 4.5 in \cite{bertsekas-tsitsiklis:1996} is satisfied. The noise term $v_k$ can also be shown to satisfy Assumption (d) of Proposition 4.5 in \cite{bertsekas-tsitsiklis:1996}, based on Proposition 5.2 in \cite{bertsekas-tsitsiklis:1996}. Finally, we are able to apply Proposition 4.5~\cite{bertsekas-tsitsiklis:1996} to conclude that the sequence $Q_k^o$ converges to $Q^*$ with probability 1.
\end{proof}

\subsection{Online Backward Version of TBQ($\sigma$)}

Since the online forward view algorithm described in Algorithm 1 needs extra memory to store the trajectories, we here also provide an online backward version of TBQ($\sigma$): TBQ$^B$($\sigma$). Based on the equivalence between forward view and backward view of the eligibility traces~\cite{sutton-barto:1998}, the online backward view version of TBQ($\sigma$) can be implemented as in Algorithm 2. The online backward view version TBQ$^B$($\sigma$) provides a more concise and efficient form and it is more efficient in executing the TBQ($\sigma$) algorithm.

\begin{algorithm}[H]
\caption{TBQ$^B$($\sigma$): On-line backward version of TBQ($\sigma$) algorithm}
\begin{algorithmic}
\STATE {\textbf{Input}: discounting factor $\gamma$, degree of cutting traces $\sigma$, bootstrapping parameter $\lambda$ and stepsize $\alpha$}
\STATE {\textbf{Initialization}: $Q(s,a)$ arbitrary}
\FOR{$k$ from $1$ to $n$}
\STATE Initialize $s,a$
\STATE $e(s,a)=0$ $\forall (s,a)$
\REPEAT
\STATE Take action $a$, observe state $s'$ and receive reward $r$
\STATE Choose $a'$ from $s'$ using $\epsilon$-greedy policy $\mu$ based on $Q(s,a)$
\STATE $a^*\leftarrow\arg\max\limits_b Q(s',b)$
\STATE $\delta\leftarrow r+\gamma\max\limits_{b}Q(s',b)-Q(s,a)$
\STATE $e(s,a)\leftarrow e(s,a)+1$
\FOR{all $s,a$}
\STATE $Q(s,a)\leftarrow Q(s,a)+\alpha\delta e(s,a)$
\IF{$a^*=a'$}
\STATE $e(s,a)\leftarrow \gamma\lambda e(s,a)$
\ELSE
\STATE $e(s,a)\leftarrow \sigma\gamma\lambda e(s,a)$ 
\ENDIF
\ENDFOR
\STATE $s\leftarrow s'$,$a\leftarrow a'$
\UNTIL $s$ is terminal
\ENDFOR
\end{algorithmic}
\end{algorithm}

\section{Experiments}
In this section, we explore the $\lambda-\sigma$ trade-off in the control case w.r.t. several environments. We empirically find that, for $\lambda\in[0,1]$ and $\epsilon\in[0,1]$, there exists some degree of utilizing traces $\sigma$, which can improve the efficiency of trace utilization.

\subsection{19-State Random Walk}
The 19-state random walk problem is a one-dimensional MDP environment which is widely used in RL~\cite{sutton-barto:2011}\cite{deasis:2017}. There are two terminal states at the two ends of the environment, transition to the left terminal receives a reward 0 and to the right terminal receives 1. The agent at each state has two actions: left and right. We here apply the online forward version TBQ$^F$($\sigma$) by using an $\epsilon-greedy$ policy as behavior policy and a greedy policy as target policy. For each episode, the maximum step is bounded as 100. We then measure the mean-squared-error~(MSE) of the optimal Q-value $Q^*$ between the estimated values and the analytically computed values after 10,000 episodes of offline running. We test 3 different $\epsilon$ values: 0.1, 0.5, 1. The corresponding distance $d$ between target policy and behavior policy is 0.05, 0.25, 0.5, respectively. For each $\epsilon$, we test different $\lambda$ values from 0 to 1 with stepsize 0.1. Also, for each $\lambda$, we also try different $\sigma$ values from 0 to 1 with stepsize 0.1. The learning stepsize $\alpha$ is tuned to 0.3. All results are averaged across 10 independent runs with fixed random seed. We compare TBQ($\sigma$) with TB($\lambda$) and Naive Q($\lambda$). For TBQ($\sigma$), we also mark out the best performance of $\sigma$, with the results shown in Figure 1. 

\begin{figure}
	\centering
	\subfigure
	{
	\includegraphics[width = .45\textwidth]{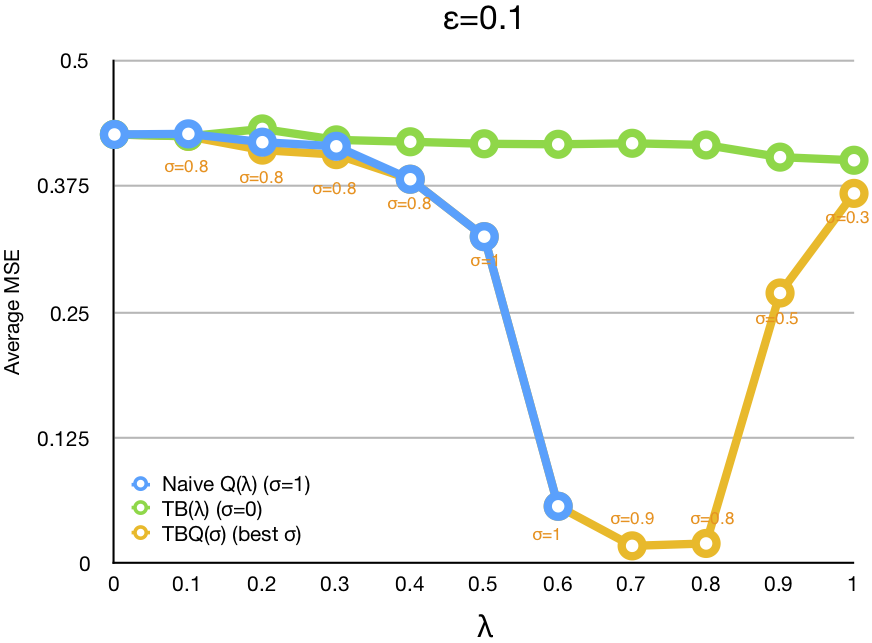}
	}
	\subfigure
	{	
	\includegraphics[width = .45\textwidth]{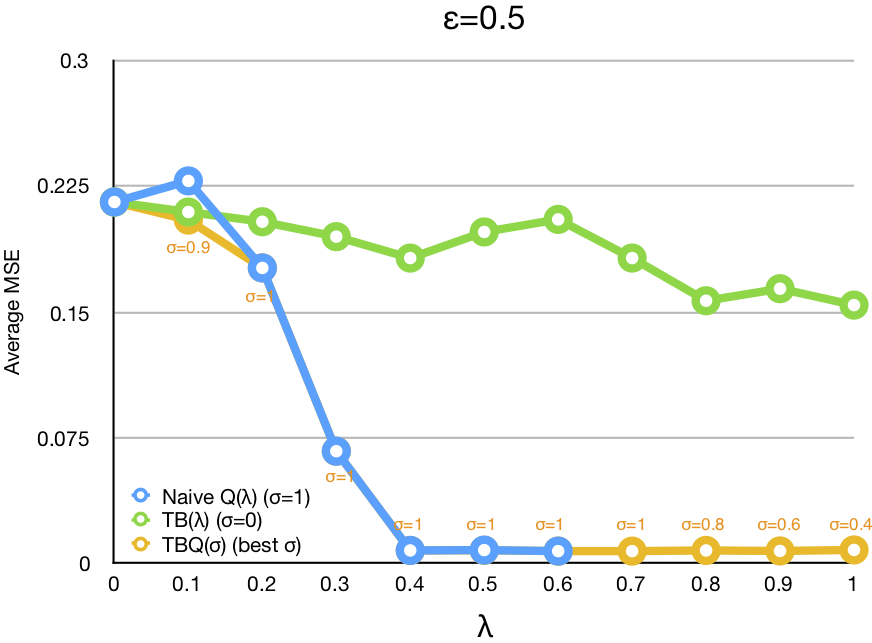}
	}
	\subfigure
	{	
	\includegraphics[width = .45\textwidth]{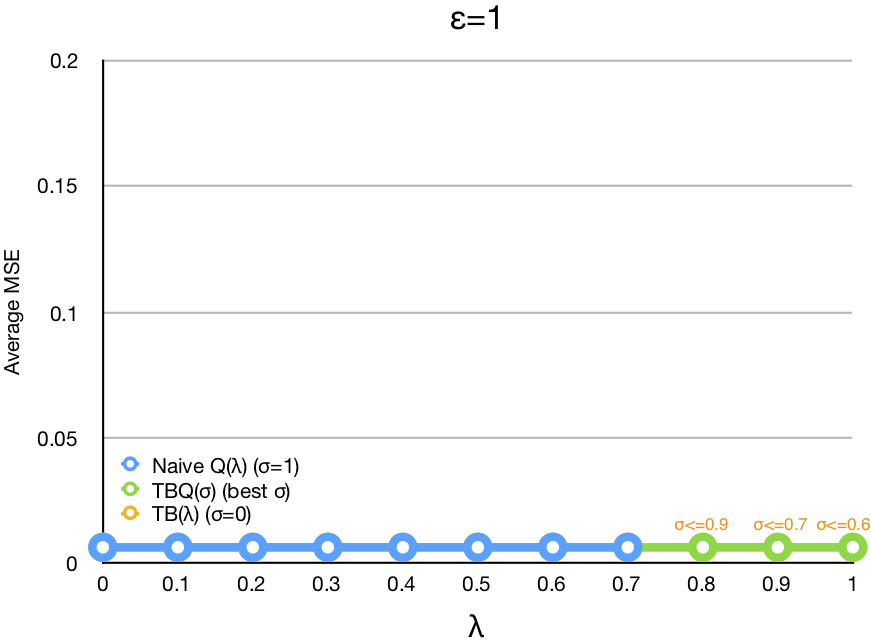}
	}
	\caption{$\lambda-\sigma$ relationship under different $\epsilon$ values} 
\end{figure}

Figure 1(a) shows that $\epsilon=0.1$ is too small for the agent to explore the whole environment. The agent can seldom reach the left terminal. In addition, since the exploratory action is also rarely taken, the MSEs of TB($\lambda$) between different $\lambda$ values vary a little. Naive Q($\lambda$) never cuts traces and enjoys the convergence when $\lambda\leq 0.6$. When $\lambda>0.6$, the Naive Q($\lambda$) diverges. The MSEs of TBQ($\sigma$) vary a little when Naive Q($\lambda$) converged. When $\lambda>0.6$, we can still tune $\sigma$ to reach a lower MSE. The best $\sigma$ of TBQ($\sigma$) decreases as the increase of $\lambda$. When $\epsilon=0.5$~(Figure 1(b)), we observe results similar to $\epsilon=0.1$ when $\lambda\leq 0.6$. When Naive Q($\lambda$) diverges, TBQ($\sigma$) can also benefit from learning from the full returns by adjusting a suitable $\sigma$. The MSE can also be reduced as well. When $\epsilon=1$, the behavior policy becomes completely random. The performance between TB($\lambda$) and Naive Q($\lambda$) is nearly the same when $\lambda\leq 0.7$. When $\lambda>0.7$ we can also adjust a suitable $\sigma$ to ensure the convergence of TBQ($\sigma$).

In this experiment, we observe that when $\epsilon\in[0,1]$, $\lambda\in[0,1]$, we can adjust a suitable $\sigma$ in order to learn from the full returns and avoid cutting traces too often as well. In practice, when $\lambda$ is close to 0, $\sigma$ can be set to 1 to make full use of the traces. When $\lambda$ is close to 1, $\sigma$ can be set to a small number near 0 to improve the efficiency of traces utilization.

\subsection{10$\times$10 Maze Environment}
The Maze environment is a 2-dimensional navigation task\footnote{We here use this version of gym-Maze environment:

https://github.com/MattChanTK/gym-maze.}. The agent's goal is to find the shortest path from start to the goal. For each state, the agent has 4 actions: go up, go down, turn left or turn right. If the path is blocked, the agent will stay at the current location. The reward is 1 when the agent reaches the goal, while at any intermediate state the agent gets reward -0.0001. Each episode is terminated if the agent reaches the goal, or the step count exceeds 2,000. To ensure adequate exploration and speed up the training process as well, we here adopt an $\epsilon-greedy$ policy as behavior policy and linearly decay the parameter $\epsilon$ from 1 to 0.1 by 0.02. In this experiment, we use the on-line backward version of TBQ($\sigma$). The learning rate $\alpha$ is tuned to 0.05. We here use 6 different $\sigma$ factors of TBQ($\sigma$): \{0, 0.2, 0.4, 0.6, 0.8, 1\}, and measure the average total steps of each episode. In addition, the results are averaged across 10 independent runs with fixed random seeds.

\begin{figure}
   \centering 
   \includegraphics[width = .5\textwidth]{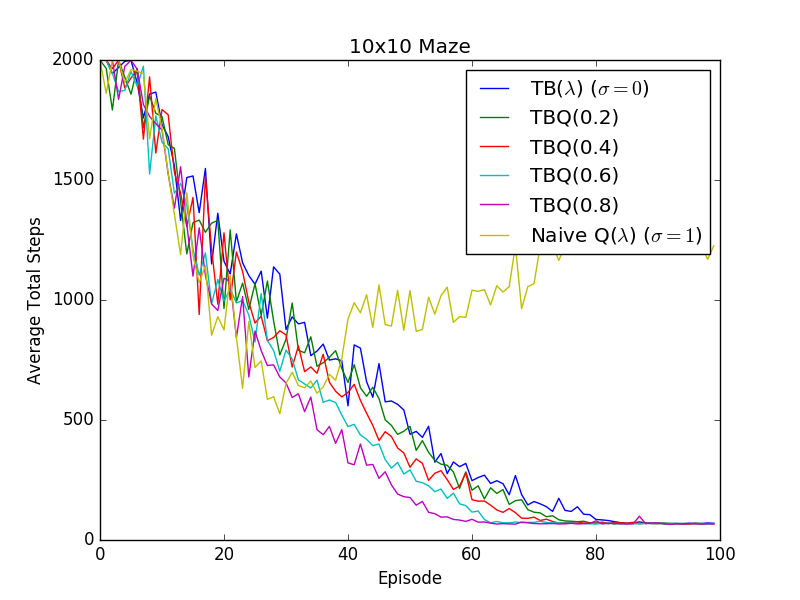}
   \caption{Averaged total steps of the Maze environment. TBQ($\sigma$) gradually accelerates the learning process when $\sigma$ varies from 0 to 0.8. However, Naive Q($\lambda$) diverges and cannot find the shortest path.}  
\end{figure} 

The result is illustrated in Figure 2. Since the shortest path of the maze is deterministic, TBQ($\sigma$) gradually accelerates the learning process when $\sigma$ varies from 0 to 0.8. However, Naive Q($\lambda$) diverges and cannot find the shortest path. The convergence speed of TBQ($\sigma$) reaches fastest at $\sigma=0.8$. The result shows that, in practice, we can accelerate the learning process by adjusting a suitable parameter $\sigma$ based on the TBQ($\sigma$) algorithm.

\subsection{TBQ($\sigma$) with Function Approximator}

We also evaluate TBQ($\sigma$) algorithm using neural networks as function approximator. With the help of deep Q-neworks~(DQN)~\cite{mnih-et-al:2015}, the offline version with a function approximator can be easily implemented. We here adopt online forward view for updating the parameters in the neural network. Unlike traditional DQN, we replay 4 consecutive sequences of samples with length of 8 for each update. We here evaluate TBQ($\sigma$) on CartPole problem~\cite{barto-et-al:1983}, and adopt the OpenAI Gym as the evaluation platform~\footnote{http: gym.openai.com}~\cite{brockman-et-al:2016}. In this setting, a pole is attached by an un-actuated joint to a cart, which can move along the track. The agent's goal is to prevent the pole from falling over with two actions controlling the cart: move left or right. Since the observation space is continuous, we adopt a two-layer neural network with 64 nodes in each layer to approximate the Q-value for the state action pairs. We use $\epsilon-greedy$ policy as behavior policy and exponentially decay the parameter $\epsilon$ from 1 to 0.1 by 0.995 to ensure adequate exploration. In addition, the target network parameters $\theta$ are updated using soft replacement~\cite{lillicrap-et-al:2015} according to the evaluation network parameter $\theta'$: $\theta\leftarrow\tau\theta+(1-\tau)\theta'$.

\begin{figure}
    \centering
    \includegraphics[width=.5\textwidth]{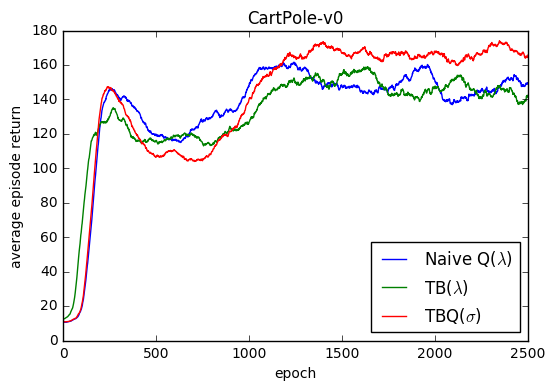}
    \caption{TBQ($\sigma$) with function approximator in CartPole environment. The exploring parameter $\epsilon$ of $\epsilon$- greedy policy decays from 1 to 0.1. To efficient utilize the traces under dynamic $\epsilon$, $\sigma$ is linearly decayed from 1 to 0.1 by step size 0.01. The result show that TBQ($\sigma$) outperforms both TB($\lambda$) and Naive Q($\lambda$).}
\end{figure}

\begin{table}[ht]
\centering
\begin{tabular}{l|r}
\hline
Parameter & Value \\
\hline
Discount factor & 0.99 \\
Initial exploration & 1 \\
Final exploration & 0.1 \\
Optimizer & Adam\cite{kingma-et-al:2014}\\
Initial learning rate & 0.001\\
Replay memory size & 20000\\
Replay start episode & 100\\
$\lambda$ & 1\\
$\tau$ & 0.001\\
\hline
\end{tabular}
\caption{Learning parameters for the neural network}
\end{table}

In this setting, in the beginning of the learning process the distance between target policy and behavior policy reach the maximum. When $\epsilon$ fades to 0.1, the two policy then become close. Therefore, to ensure convergence we here adopt a dynamic $\sigma$ linearly increase from 0.1 to 1 by stepsize 0.01. Other main learning parameters are listed in Table 1. the results are averaged across 5 independent runs with fixed random seeds. The result is showed in Figure 3. We also smooth the results with a right-centred moving average of 50 successive episodes. With a dynamic suitable $\sigma$, TBQ($\sigma$) outperforms TB($\lambda$) and Naive Q($\lambda$) in the CartPole problem. The result indicates that in practice, we can improve the learning by adjusting a suitable parameter $\sigma$ using TBQ($\sigma$) algorithm.

\section{Discussion and Conclusion}
In this paper, we propose a new off-policy learning method called TBQ($\sigma$) to define the degree of utilizing the off-policy traces. TBQ($\sigma$) unifies TB($\lambda$) and Naive Q($\lambda$). Theoretical analysis shows the contraction property of TBQ($\sigma$) in both policy evaluation and control. In addition, its convergence is proved for control setting. We also provide two versions of TBQ($\sigma$) control algorithm: online forward version TBQ$^F$($\sigma$) and online backward version TBQ$^B$($\sigma$). 

Comparing to TB($\lambda$), the proposed algorithm improves the efficiency of trace utilization when target policy is greedy. Comparing to Naive Q($\lambda$), our algorithm has relatively loose convergence requirement. Since the coefficient $c$ in our algorithm is less than 1, the variance of our algorithm is bounded~\cite{munos:2016}. Although we are not able to give further theoretical analysis between bootstrapping parameter $\lambda$ and degree of cutting traces $\sigma$ on convergence, we empirically show that the existing off-policy learning algorithms with eligibility traces can be improved and accelerated by adjusting a suitable trace-cutting degree parameter $\sigma$. The theoretical relationship between bootstrapping parameter $\lambda$ and $\sigma$ is remained for the future work.

\begin{acks}
The authors would like to thank the anonymous reviewers for their valuable comments and suggestions. This work is partly supported by National Key Research and Development Plan under Grant No. 2016YFB1001203, Zhejiang Provincial Natural Science Foundation of China (LR15F020001).

\end{acks}


\bibliographystyle{ACM-Reference-Format}  
\balance  
\bibliography{reference}  

\end{document}